\def\BibTeX{{\rm B\kern-.05em{\sc i\kern-.025em b}\kern-.08em
    T\kern-.1667em\lower.7ex\hbox{E}\kern-.125emX}}
\tikzset{%
    block/.style={draw, fill=white, rectangle, 
            minimum height=2em, minimum width=2em},
    input/.style={inner sep=0pt},       
    output/.style={inner sep=0pt},      
    sum/.style = {draw, fill=white, circle, minimum size=2mm, node distance=1.5cm, inner sep=0pt},
    pinstyle/.style = {pin edge={to-,thin,black}}
}
\tikzstyle{block} = [draw, rectangle, text width=1.5cm, text centered, minimum height=1cm, node distance=3.3cm,fill=white]
\tikzstyle{container} = [draw, rectangle, inner sep=0.8cm, fill=white,minimum height=2cm]
\def\bottom#1#2{\hbox{\vbox to #1{\vfill\hbox{#2}}}}
\tikzset{
  mybackground/.style={execute at end picture={
      \begin{scope}[on background layer]
        \node[] at (current bounding box.south){\bottom{1cm} #1};
        \end{scope}
    }},
}
\newtheorem{thm}{Theorem}
\theoremstyle{definition}
\newtheorem{definition}{Definition}[section]
\theoremstyle{definition}
\newcommand{\LU}{L_{\text{underfit}}}
\DeclareMathOperator{\E}{\mathbb{E}}
\DeclareMathOperator{\D}{\mathcal{D}}
\let\OldTitle\title
\renewcommand{\title}[1]{\OldTitle{\resizebox{1.\linewidth}{!}{#1}}}
\title{Undecidability of Underfitting in Learning Algorithms}
\author{\IEEEauthorblockN{Sonia Sehra}
\IEEEauthorblockA{Cloud and AI \\
\textit{Microsoft}\\
Redmond, WA, USA \\
sonia.sehra@microsoft.com}
\and
\IEEEauthorblockN{David Flores}
\IEEEauthorblockA{AMISTAD Lab \\
\textit{Harvey Mudd College}\\
Claremont, CA, USA \\
deflores@hmc.edu}
\and
\IEEEauthorblockN{George D.\ Monta\~{n}ez}
\IEEEauthorblockA{
AMISTAD Lab \\
\textit{Harvey Mudd College}\\
Claremont, CA, USA \\
gmontanez@hmc.edu}
}
\begin{document}

\maketitle
\thispagestyle{empty}
\pagestyle{empty}

\begin{abstract}
Using recent machine learning results that present an information-theoretic perspective on underfitting and overfitting, we prove that deciding whether an encodable learning algorithm will always underfit a dataset, even if given unlimited training time, is undecidable.
We discuss the importance of this result and potential topics for further research, including information-theoretic and probabilistic strategies for bounding learning algorithm fit. \end{abstract}
\begin{IEEEkeywords}
undecidability, underfitting, machine learning
\end{IEEEkeywords}

\section{Introduction}

Overfitting and underfitting are often explained as symptoms of the bias-variance trade-off \cite{geman1992neural,kohavi1996bias}, where overfitting describes when a learning method has low training but high test error, and underfitting occurs when a method has high training and high test error. Seeking robust definitions of overfitting and underfitting, we build on recent work in machine learning looking at both phenomena from an information-theoretic perspective \cite{bashir-2020-ITP}.

Information-theoretic notions of overfitting  reflect when algorithms go beyond learning true regularities in data, inadvertently capturing noise in their modeling processes. In contrast, underfitting describes when an algorithm ``fails to capture enough'' information to learn the true regularities in data. Marrying this intuitive notion of underfitting with existing definitions of underfitting can help us derive new insights and define specific circumstances into when and how algorithms underfit.

In this paper, we examine definitions of underfitting using notions of algorithmic capacity and dataset complexity developed by Bashir et al. \cite{bashir-2020-ITP}, and further, prove that determining whether an arbitrary learning algorithm will always \textit{underfit} a particular dataset is formally undecidable.
We will demonstrate this by a reduction from the halting problem.

Our proof for the undecidability of the underfitting uses definitions from Bashir et al. \cite{bashir-2020-ITP}, building on that work while focusing primarily on underfitting. In their aforementioned paper, Bashir et al.\ proved the formal undecidability of the overfitting problem, while leaving open the problem of establishing a similar proof for underfitting. Inspired by their method, we accomplish that task here.

\section{Related Work}

This paper builds on the larger algorithmic search framework for machine learning
developed by Monta{\~n}ez~\cite{montanez2017machine}, on which Bashir et al.\ also build~\cite{bashir-2020-ITP}. Our notions of algorithm and
dataset complexity tie into an existing body of work developed by Lauw et al.~\cite{lauw2020bias}, while using a definition of underfitting most closely related to that of Bashir et al.~\cite{bashir-2020-ITP}. These
papers build on previous work characterizing the capacity of machine learning
algorithms, such as the VC dimension \cite{vapnik2015uniform}, Rademacher
complexity \cite{rademacherlecture}, and Labeling Distribution Matrices
\cite{pss2020LDM}.

There has been a large body of research into underfitting in machine learning
algorithms, and approaches that can be taken to detect and avoid underfitting
algorithms. Gavrilov et al.\ studied causes of over- and underfitting, and
methods to prevent it, in Convolutional Neural Nets
\cite{gavrilov2018preventing}. Li et al.\ did similar work for decision trees
\cite{li2016solving}, and Narayan et al.\ did similar work for Multilayer
Perceptrons \cite{narayan2005analysis}. Many other examples could be given.

There has also been other recent work on decidability for
problems related to machine learning. Ben-David et al.\ recently proved that the
learnability problem is undecidable \cite{ben2019learnability}. Building on
that work, Gandolfi proved that the sample complexity of
an algorithm, a measure of how many samples are needed to solve a problem, is
decidable in some circumstances \cite{gandolfi2020decidability}.

\subsection{Relation to Bashir et al. \cite{bashir-2020-ITP}}   

As we have already noted, this paper is largely built around the definitions developed by Bashir et al. \cite{bashir-2020-ITP}.
 Our work accomplishes three main goals: (1) it expands on the brief description of underfitting presented in that paper, (2) gives a new model-specific definition of underfitting, and (3) proves the formal undecidability of underfitting under this definition.

While Bashir et al.\
introduced an information-theoretic definition for underfitting, their analysis primarily focused on overfitting rather than underfitting.
Our paper analyzes their definition of underfitting, creates a complementary model-specific definition of underfitting, and finally presents rigorous conclusions which can be drawn from that
definition. In particular, we prove that underfitting, like overfitting, is 
formally undecidable.

\section{Background}
For completeness, we reproduce several of the definitions developed by Bashir et al., and employ the same assumptions regarding algorithms, datasets, and hypothesis spaces that are used in that work.

We begin with Definition 4 from Bashir et al., a definition of time-indexed capacity. Let $\mathcal{G}$ denote a finite hypothesis space available to a learning algorithm $\mathcal{A}$ (or more generally, let $\mathcal{G}$ be a finite search space sampled by a search algorithm $\mathcal{A}$).
\begin{definition}[Time-indexed Capacity, from~\cite{bashir-2020-ITP}]\label{def:TIME-INDEXED-CAPACITY}
Let $P_i$ denote the (stochastic) probability distribution over $\mathcal{G}$ at time $i$. $\mathcal{A}$'s capacity \textit{at time i} is the maximum amount of information $\mathcal{A}$ may transfer from a dataset $D \sim \D$ to $G_i \sim \E[P_i|D]$,
    $$C^i_{\mathcal{A}} = \sup_{\D} I(G_i;D),$$
where $\mathcal{A}$ is a learning algorithm and $\mathcal{G}$ is the algorithm's hypothesis space.
\end{definition}
Definition~\ref{def:TIME-INDEXED-CAPACITY} gives us a limit on how much information a learning algorithm is able to extract from a dataset at time $i$ in reference to a finite hypothesis space $\mathcal{G}$. It does so by making use of the information-theoretic quantity of mutual information, which captures the level of dependence between two random variables.

Next, we present Definitions 6 and 7 from Bashir et al., for Dataset Turing Complexity and Dataset Complexity. These definitions allow us to characterize the (algorithmic) information content of arbitrary datasets.
\begin{definition}[Dataset Turing Complexity, from~\cite{bashir-2020-ITP}]
Given a fixed Turing Machine $M$ that accepts a string $p$ and feature vector $x$ as input and outputs a label $y$, the \textit{data complexity} of a dataset $D$ is
$$C_{D,M} = L(\langle M \rangle) + L(p),$$
where $L(p) = \min \{|p|: \forall (\textbf{x},y) \in D, M(p,\textbf{x}) = y\}$. That is, the data complexity $C_{D,M}$ is the length of the shortest program that can correctly map every input in the dataset $\mathcal{D}$ to its corresponding output.
\end{definition}

Before we present the definition of Dataset Complexity from Bashir et al., we need to introduce the quantity $C'_D$.
\begin{definition}[$C'_D$, from \cite{bashir-2020-ITP}]\label{def:UNCOMPRESSED-COMPLEXITY}
$$C'_D = \sum_{i=1}^n b(z_i),$$
where $b(z_i)$ is the number of bits required to encode the feature-label pair $z_i$ from dataset $D=(z_1, \dots, z_n)$, without any compression.
\end{definition}
As noted in Bashir et al., $C'_D$ represents the amount of information needed to memorize a dataset $D$ without compression.
Now, we define Dataset Complexity, following Definition 7 of Bashir et al.:
\begin{definition}[Dataset Complexity, from Bashir et al.~\cite{bashir-2020-ITP}]
$C_D = \min\{C_{D,M}, C'_D\}$.
\end{definition}
Note that $C_D \geq C'_D$, allowing the dataset complexity $C_D$ to be easily upper bounded, which is helpful since $C_{D,M}$ is generally uncomputable.

Having provided some background definitions from Bashir et al., we define underfitting and present several related definitions.
Note, we assume that $D \sim \mathcal{D}$ whenever the distribution of $D$ is not stated explicitly in the definitions that follow.

First, Definition 9 of Bashir et al.\ defines underfitting at iteration $i$ as:

\begin{definition}[Underfitting, from Bashir et al.~\cite{bashir-2020-ITP}]\label{def:underfit}
An algorithm $\mathcal{A}$ \textbf{underfits} at iteration $i$ if $$C^i_{\mathcal{A}} < \E_{\mathcal{D}}[C_D]$$
i.e., after training for $i$ timesteps, $\mathcal{A}$ has time-indexed capacity strictly less than $\E_{\mathcal{D}}[C_D]$.
\end{definition}
As noted in \cite{bashir-2020-ITP}, underfitting \textit{``could be the result of insufficient capacity, insufficient training, or insufficient information retention, all of which are captured by $C^i_{\mathcal{A}}$.''}

The pointwise information transfer from a single dataset to a single model is also defined, which is useful in defining overfitting and underfitting of a particular model (hypothesis) $g$ on a particular dataset $d$.
\begin{definition}[Pointwise Information Transfer \cite{bashir-2020-ITP}]
\label{defn:pointwise-transfer}
For a given dataset $d$ and specific hypothesis $g$, the \textbf{pointwise information transfer} by algorithm $\mathcal{A}$ from $d$ to $g$ is the pointwise mutual information (lift),
\begin{align*}
    C_{\mathcal{A}}(g,d) &= \log_2 \frac{p(g,d)}{p(g)p(d)} = \log_2 \frac{p(g|d)}{p(g)} = \log_2 \frac{p(d|g)}{p(d)}. 
\end{align*}
\end{definition}

In the same way that Bashir et al.\ considered overfitting for a single model and dataset, using pointwise information transfer we can define model underfitting. 
\begin{definition}[Model Underfit]\label{def:MODEL-UNDERFIT}
$\mathcal{A}$'s \textbf{model $g$ underfits $d$} if $C_{\mathcal{A}}(g,d) < C_d$.
\end{definition}

These definitions will allow us to prove the formal undecidability of the underfitting problem, which we turn to next.

\section{Undecidability of Underfitting}\label{sec:UNDECIDABILITY}
We now demonstrate the undecidability of underfitting as Bashir et al.\ did for overfitting.
Whether an iteratively trained learning algorithm underfits, fits, or overfits is often a matter of just how much training has been allowed. Some algorithms initially produce models that underfit, but with more training eventually fit the data well. Other learning methods suffer from such low representational capacity that they can never produce a model that fits, even with unlimited training time. We prove here that determining whether an algorithm will eventually produce a model that does not underfit is formally undecidable. Like Bashir and collaborators, we do so by a reduction from the halting problem.

While we derive our proof using the model-specific notion of underfitting from Definition~\ref{def:MODEL-UNDERFIT}, without too much work the proof can be modified to employ the expectation-centric definition of underfitting from Definition~\ref{def:underfit}. This suggests that even determining if an algorithm will always underfit relative to a distribution on datasets is also formally undecidable, since we can always create an algorithm whose maximum capacity changes whenever a Turing machine $M$ halts on an input $w$. Thus, it doesn't matter if we're comparing the pointwise algorithm capacity against a fixed dataset complexity or the maximum algorithm capacity against an expected dataset complexity; it is the changing algorithm capacity in response to algorithmic halting that does the work in the proof. We present our main result next.

\begin{thm}[The Undecidability of Underfitting]
\label{thm:UNDECIDABLE-UNDERFITTING}
    Let $S$ be the set of all encodable learning algorithms and let $\langle \mathcal{A}\rangle$ denote the encoded form of algorithm $\mathcal{A}$. Then, for any dataset $d$,
    \[
        \LU = \{\langle \mathcal{A}\rangle, d | \mathcal{A} \in S, \mathcal{A} \text{ underfits } d \text{ at all iterations}\}
    \]
    is undecidable.
\end{thm}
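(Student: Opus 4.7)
The plan is to reduce the halting problem to $\LU$. Given an instance $\langle M, w\rangle$ of the halting problem, I will construct, in a computable way, an encodable learning algorithm $\mathcal{A}_{M,w}$ together with a dataset $d$ such that $\mathcal{A}_{M,w}$ underfits $d$ at every iteration if and only if $M$ does not halt on $w$. Since halting is undecidable, a decider for $\LU$ would yield a decider for the complement of the halting problem, giving a contradiction.

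The construction proceeds as follows. Fix a finite hypothesis space $\mathcal{G}$ containing two distinguished hypotheses $g_0$ and $g_1$, and fix a dataset $d$ whose complexity $C_d$ is well defined via Definition~\ref{def:UNCOMPRESSED-COMPLEXITY}. Choose $g_0$ to be a fixed, data-independent hypothesis whose pointwise information transfer satisfies $C_{\mathcal{A}_{M,w}}(g_0,d) < C_d$, so that outputting $g_0$ forces model underfitting by Definition~\ref{def:MODEL-UNDERFIT}. Choose $g_1$ to be a hypothesis tailored to $d$ (for instance, one that memorizes $d$ and is selected with probability one conditional on $d$), so that $C_{\mathcal{A}_{M,w}}(g_1,d) \geq C_d$, making $g_1$ a non-underfitting model. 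On iteration $i$, $\mathcal{A}_{M,w}$ simulates $M$ on $w$ for $i$ steps: if the simulation has not halted, it concentrates its distribution $P_i$ on $g_0$; if the simulation has halted, it concentrates $P_i$ on $g_1$ from that iteration forward.

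Correctness follows directly. If $M$ does not halt on $w$, then at every iteration $i$ the algorithm outputs $g_0$ with probability one, and by construction $g_0$ underfits $d$, so $\langle\mathcal{A}_{M,w}\rangle, d \in \LU$. Conversely, if $M$ halts on $w$ at step $t$, then for every iteration $i \geq t$ the algorithm outputs $g_1$, which does not underfit $d$, so $\langle\mathcal{A}_{M,w}\rangle, d \notin \LU$. Because the map $\langle M,w\rangle \mapsto (\langle\mathcal{A}_{M,w}\rangle, d)$ is computable and $\mathcal{A}_{M,w}$ is encodable by design, any decision procedure for $\LU$ would decide non-halting, completing the reduction.

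The main obstacle is arranging the probability distributions underlying Definition~\ref{defn:pointwise-transfer} so that the strict inequality in Definition~\ref{def:MODEL-UNDERFIT} holds cleanly for $g_0$ and fails for $g_1$, uniformly in the halting behavior of $M$. I plan to handle this by making $g_0$ essentially independent of the data (so $p(g_0 \mid d) \approx p(g_0)$ and the log-lift is strictly less than $C_d$), and by making $g_1$ a direct encoding of $d$ (so $p(g_1 \mid d) = 1$ while $p(g_1)$ is vanishingly small, forcing the lift above $C_d$). Once this calibration is pinned down, the only computationally nontrivial component of $\mathcal{A}_{M,w}$ is the simulation of $M$ on $w$, which is exactly the ingredient that transports the undecidability of halting into $\LU$. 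A brief remark will note, following the discussion preceding the theorem, that the same template adapts to the expectation-centric Definition~\ref{def:underfit} by replacing pointwise lifts with time-indexed capacities.
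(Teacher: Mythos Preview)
Your proposal is correct and follows essentially the same route as the paper: a reduction from halting in which the constructed learner outputs a data-independent (zero-lift) model until the simulated machine halts, then switches to a memorizing model whose pointwise lift meets or exceeds $C_d$. The only cosmetic differences are that the paper dovetails by running the full simulation after a single initial iteration (rather than $i$ steps at iteration $i$) and makes the calibration you flag as the ``main obstacle'' explicit by drawing $d$ uniformly from $2^k$ binary strings so that $p(d)=2^{-C'_d}$ and the memorizing model's marginal equals $2^{-C'_d}$, forcing the lift to equal $C'_d\geq C_d$.
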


\begin{proof}
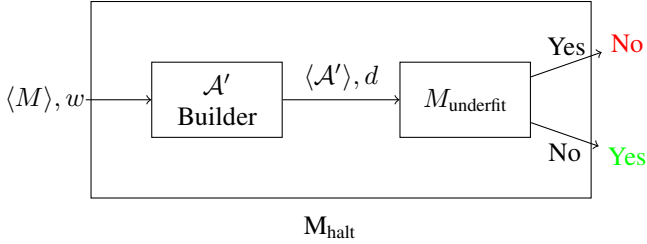
\begin{figure}[htbp]
    \centering
    \begin{tikzpicture}[mybackground={M$_{\text{halt}}$}]
        \node [input, name=text1] {$\langle M \rangle, w$};
        \node [block, right=0.87cm of text1] (text2) {$\mathcal{A}'$\\Builder};
        \node [block, right of=text2] (text3) {$M_{\text{underfit}}$};
        
        \node [above right=0cm and 0.94cm of text3, node distance=5cm] (text4) {\color{red} No};
        \node [below of=text4, node distance=1.5cm] (text5) {\color{green} Yes};
    
        \begin{scope}[on background layer]
            \node [container,fit=(text2) (text3)] (container) {};
        \end{scope}
        
        \draw [->] (text1) -- (text2);
        \draw [->] (text2) -- node [text width=2cm,midway,above,align=center] {$\langle\mathcal{A}'\rangle, d$} (text3);
        \draw [->] (text3) -- node [above] {Yes} (text4);
        \draw [->] (text3) -- node [below] {No} (text5);
    \end{tikzpicture}
    \caption{$M_{\text{halt}}$ constructed using $M_{\text{underfit}}$.}
    \label{fig:halting-decider-from-M-underfit}
\end{figure}

We show that $\LU$ is undecidable using a reduction from the halting problem. By way of contradiction, assume that $\LU$ is decidable. There then exists a Turing machine, denoted $M_{\text{underfit}}$, which halts for all inputs of the form $\langle \mathcal{A}\rangle$,$d$ and determines whether $\mathcal{A}$ will always produce models that underfit $d$, that is, which do not transfer enough information from the dataset to the model to capture the relationship of the training data. In formal terms, a model $g^i$ at iteration $i$ underfits dataset $d$ whenever $C_{\mathcal{A}}(g^i,d) < C_d$, in accordance with Definition~\ref{def:MODEL-UNDERFIT}.

We now use $M_\text{underfit}$ to construct a decider for $L_{\text{halt}}$ with the given steps (this decider is also shown in Figure \ref{fig:halting-decider-from-M-underfit}):
    
We create an auxiliary machine called \textit{$\mathcal{A}'$ builder}, which takes as input a Turing machine encoding and input string, $\langle M \rangle, w$. $\mathcal{A}'$ builder then creates an encoded algorithm $\mathcal{A}'$ representing an iterative learning method, exporting the encoded algorithm together with a training dataset $d$. The dataset $d$ consists of a single training example with $k-1$ binary features and a single binary label, for some positive integer $k > 1$. We construct $d = \{(x_1, y_1)\}$ as follows, where $x_1$ denotes features and $y_1$ the label. The features are chosen uniformly at random, as is the label; simply put, we generate a $k$-length binary string by flipping a fair coin, and take the first $k-1$ bits as the features $x_1$, and the final bit as $y_1$. This produces one of $2^k$ possible datasets with equal probability, where $C'_d = k$ for each one (since the $k$-length binary string fully encodes the feature-label pair, without compression). Thus, $p(d) = 2^{-k} = 2^{-C'_d}$. Also note that $C_d > 0$ because $d$ is nonempty and so any encoded Turing machine that produces it must consist of one or more bits.

$\mathcal{A}'$ works as follows:
On its initial iteration, it produces a learning model that outputs a constant zero value for all labels, independent of the data. We consider this point in time $t_1$ and let $g^{t_1}$ represent this initial constant model. Thus, $\mathcal{A}$ has a pointwise information transfer of zero, $C_{\mathcal{A}}(g^{t_1},d) = 0$, since $p(g^{t_1}|d) = p(g^{t_1})$, as the algorithm will produce this initial model with probability 1, independent of $d$. Therefore, $$0 = C_{\mathcal{A}}(g^{t_1},d) < C_d,$$ and we see that at time point $t_1$ all of $\mathcal{A}$'s models produced so far (namely, the single model $g^{t_1}$) will \textbf{underfit} by Definition \ref{def:MODEL-UNDERFIT}.

After this, $\mathcal{A}'$ simulates $M$ on $w$. If $M$ halts on input $w$, $\mathcal{A}'$ then updates its internal model as follows. For any input with features equal to $x_1$ it predicts the label $y_1$, effectively memorizing the datapoint. For all other inputs, it produces the label $1 - y_1$, that is, it negates the label $y_1$. For example, if $y_1 = 1$ and $x_1 = (0,0,1,1)$, then the model will have exactly one response equal to $1$, namely for input $(0,0,1,1)$, and all other inputs will map to a response label of $0$. If $y_1 = 0$, we would have exactly one response with value of $0$, and all other inputs would map to $1$. Thus, at time point $t_2$ we will have one of $2^k$ possible models. Note that a bijective mapping exists between the model $g^{t_2}$ and the dataset $d$; changing a single bit in the dataset $d$ results in a different model. Therefore, $p(g^{t_2}|d) = 1$ (the algorithm's choice is deterministic given $d$), and 
\begin{align*}
p(g^{t_2}) &= \sum_{d'} p(g^{t_2}|d')p(d') \\
    &= 1 \cdot p(d) + \sum_{d' \not = d} 0 \cdot p(d') \\
    &= 2^{-C'_d}
\end{align*}
by construction. Using Definition~\ref{defn:pointwise-transfer} we obtain
\begin{align*}
C_{\mathcal{A}}(g^{t_2},d) 
&= \log_2 p(g^{t_2}|d)/p(g^{t_2})\\
&= \log_2 2^{C'_d}\\
&= C'_d \\
&\geq C_d
\end{align*}
and $\mathcal{A}'$ produces a model that does \textbf{not underfit} at time point $t_2$, according to Definition~\ref{def:MODEL-UNDERFIT}.

If $M$ does not halt on input $w$, then the algorithm only completes a single iteration, leaving the original underfitting model intact. Thus, $\mathcal{A}'$ will underfit $d$ at all iterations if and only if machine $M$ does not halt on input $w$.

Under the assumption that $M_{\text{underfit}}$ exists, we can pass the outputs of $\mathcal{A}'$ builder to this machine and ask $M_{\text{underfit}}$ if $\mathcal{A}'$ underfits $d$ at all iterations. The way it answers will tell us whether $M$ halts on $w$, as it will always underfit only in the case that $M$ does not halt on input $w$. The outputs from $M_{\text{underfit}}$ are then swapped and routed to the output of machine $M_{\text{halt}}$, creating a decider for $L_{\text{halt}}$, which is a contradiction. Having reached a contradiction, our initial assumption that $M_{\text{underfit}}$ exists cannot hold, and $\LU$ is therefore undecidable.
\end{proof}

\section{Discussion}
We have proven that, under a rigorous and reasonable information-theoretic definition of underfitting, the problem of determining whether an arbitrary learning algorithm will eventually fit a dataset given enough training time is formally undecidable. While a perfect algorithm cannot exist, this does not rule out underfitting detection in special cases, such as for fixed-capacity algorithms. However, our theorem guarantees no underfitting predictor can be universally applicable: it will either apply only to a subset of algorithms, will sometimes produce incorrect results, or will fail to terminate when applied to some algorithm and dataset pairs.

Opportunities for future work remain.
Just as Bashir et al.\ established bounds for algorithm capacity and distributional algorithm capacity \cite{bashir-2020-ITP}, it should be possible to create bounds using time-indexed capacity. Creating such bounds may allow for new insights on when algorithms might underfit. Research into the special case of fixed-capacity learning algorithms seems especially promising.
Another direction to explore is whether the degree to which an algorithm underfits, captured by $\E_{\D}[C_D]-C^i_{\mathcal{A}}$, can be used to bound a learning method's generalization error.

\section{Conclusion}

Underfitting remains a problem when training iterative algorithms. In trying to understand and describe this phenomenon, we may be tempted to create a list of criteria to predict exactly when an algorithm will underfit.
While being able to determine with certainty that an arbitrary algorithm will underfit would be useful, we show that in general this cannot be done, for if it could, we would also be able to decide the halting problem.

Although it is impossible to always determine whether an algorithm will underfit, this does not rule out probabilistic bounds on the likelihood of underfitting nor exact determination for specific classes of learning algorithms. Future work may include bounding the probability of underfitting given an algorithm and a dataset.
Investigating these questions may also lead to a better understanding of why common solutions for underfitting work, by linking these strategies to information-theoretic notions of algorithm capacity and mutual information.

\section*{Author Contributions}

S. Sehra is responsible for the initial formulation of the paper and proof and for the key insight that the underfitting problem may be formally undecidable. D. Flores contributed to the Related Work section, much of the Introduction and Conclusion, and the Discussion section. G. Monta\~{n}ez contributed Definition~\ref{def:MODEL-UNDERFIT} and formally proved Theorem~\ref{thm:UNDECIDABLE-UNDERFITTING}, with support in figure creation and layout from Sehra and Flores. Monta\~{n}ez contributed a majority of the text from the preamble of  Section~\ref{sec:UNDECIDABILITY}. All authors contributed to the prose of the manuscript, and all authors participated in editing the final manuscript, sharing responsibility for the conclusions and content contained herein.
\vfill
\bibliographystyle{IEEEtran}
\bibliography{references}

\end{document}